\newtheorem{theorem}{Theorem}
\newtheorem{corollary}[theorem]{Corollary}
\newtheorem{definition}[theorem]{Definition}
\newtheorem{assumption}{Assumption}
\newcommand{\RR}{\mathbb{R}}
\newcommand{\EE}{\mathbb{E}}
\newcommand{\Xcal}{\mathcal{X}}
\newcommand{\Ycal}{\mathcal{Y}} 
\newcommand{\Zcal}{\mathcal{Z}}
\newcommand{\Ecal}{\mathcal{E}}
\newcommand{\Fcal}{\mathcal{F}}
\newcommand{\Acal}{\mathcal{A}} 
\newcommand{\indi}{\mathds{1}}
\newcommand{\PP}{\mathbb{P}}
\newcommand{\datadist}{\mathcal{D}}
\newcommand{\unif}{\mbox{Uniform}} 
\def \basefigwidth{0.49}
\begin{document}  
   
\title{Model Similarity Mitigates Test Set Overuse}

\author{Horia Mania, John Miller, Ludwig Schmidt, Moritz Hardt, and Benjamin Recht \\
University of California, Berkeley}

\date{}
\maketitle    
  
\begin{abstract}
Excessive reuse of test data has become commonplace in today's machine learning workflows. Popular benchmarks, competitions, industrial scale tuning, among other applications, all involve test data reuse beyond guidance by statistical confidence bounds. Nonetheless, recent replication studies give evidence that popular benchmarks continue to support progress despite years of extensive reuse. We proffer a new explanation for the apparent longevity of test data: Many proposed models are similar in their predictions and we prove that this similarity mitigates overfitting. Specifically, we show empirically that models proposed for the ImageNet ILSVRC benchmark agree in their predictions well beyond what we can conclude from their accuracy levels alone. Likewise, models created by large scale hyperparameter search enjoy high levels of similarity. Motivated by these empirical observations, we give a non-asymptotic generalization bound that takes similarity into account, leading to meaningful confidence bounds in practical settings.
\end{abstract}

\section{Introduction}

Be it validation sets for model tuning, popular benchmark data, or machine learning competitions, the holdout method is central to the scientific and industrial activities of the machine learning community. As compute resources scale, a growing number of practitioners evaluate an unprecedented number of models against various holdout sets.
%
These practices, collectively, put significant pressure on the statistical guarantees of the holdout method. Theory suggests that for $k$ models chosen independently of $n$ test data points, the holdout method provides valid risk estimates for each of these models up to a deviation on the order of $\sqrt{\log(k)/n}.$
But this bound is the consequence of an unrealistic assumption.
In practice, models incorporate prior information about the available test data since human analysts choose models in a manner guided by previous results.
Adaptive hyperparameter search algorithms similarly evolve models on the basis of past trials. 

Adaptivity significantly complicates the theoretical guarantees of the holdout method. A simple adaptive strategy, resembling the practice of selectively ensembling $k$ models, can bias the holdout method by as much as $\sqrt{k/n}$. If this bound were attained in practice, holdout data across the board would rapidly lose its value over time. Nonetheless, recent replication studies give evidence that popular benchmarks continue to support progress despite years of extensive reuse \cite{recht2019imagenet, yadav2019mnist}.

In this work, we contribute a new explanation for why the adaptive bound is not attained in practice and why even the standard non-adaptive bound is more pessimistic than it needs to be. Our explanation centers around the phenomenon of \emph{model similarity}. Practitioners evaluate models that incorporate common priors, past experiences, and standard practices. As we show empirically, this results in models that exhibit significant agreement in their predictions, well beyond what would follow from their accuracy values alone. Complementing our empirical investigation of model similarity, we provide a new theoretical analysis of the holdout method that takes model similarity into account, vastly improving over known bounds in the adaptive and non-adaptive cases when model similarity is high.

\subsection{Our contributions} 
Our contributions are two-fold. On the empirical side, we demonstrate that a large number of proposed ImageNet \cite{imagenet,imagenetcompetition} and CIFAR-10 \cite{cifar10} models exhibit a high degree of similarity: Their predictions agree far more than we would be able to deduce from their accuracy levels alone. Complementing our empirical findings, we give new generalization bounds that incorporate a measure of similarity. Our generalization bounds help to explain why holdout data has much greater longevity than prior bounds suggest when models are highly similar, as is the case in practice. Figure~\ref{fig:imagenet} summarizes these two complementary developments.

\begin{figure}[t] 
\centering
\begin{subfigure}[t]{\basefigwidth\textwidth}
\centerline{\includegraphics[width=\columnwidth]{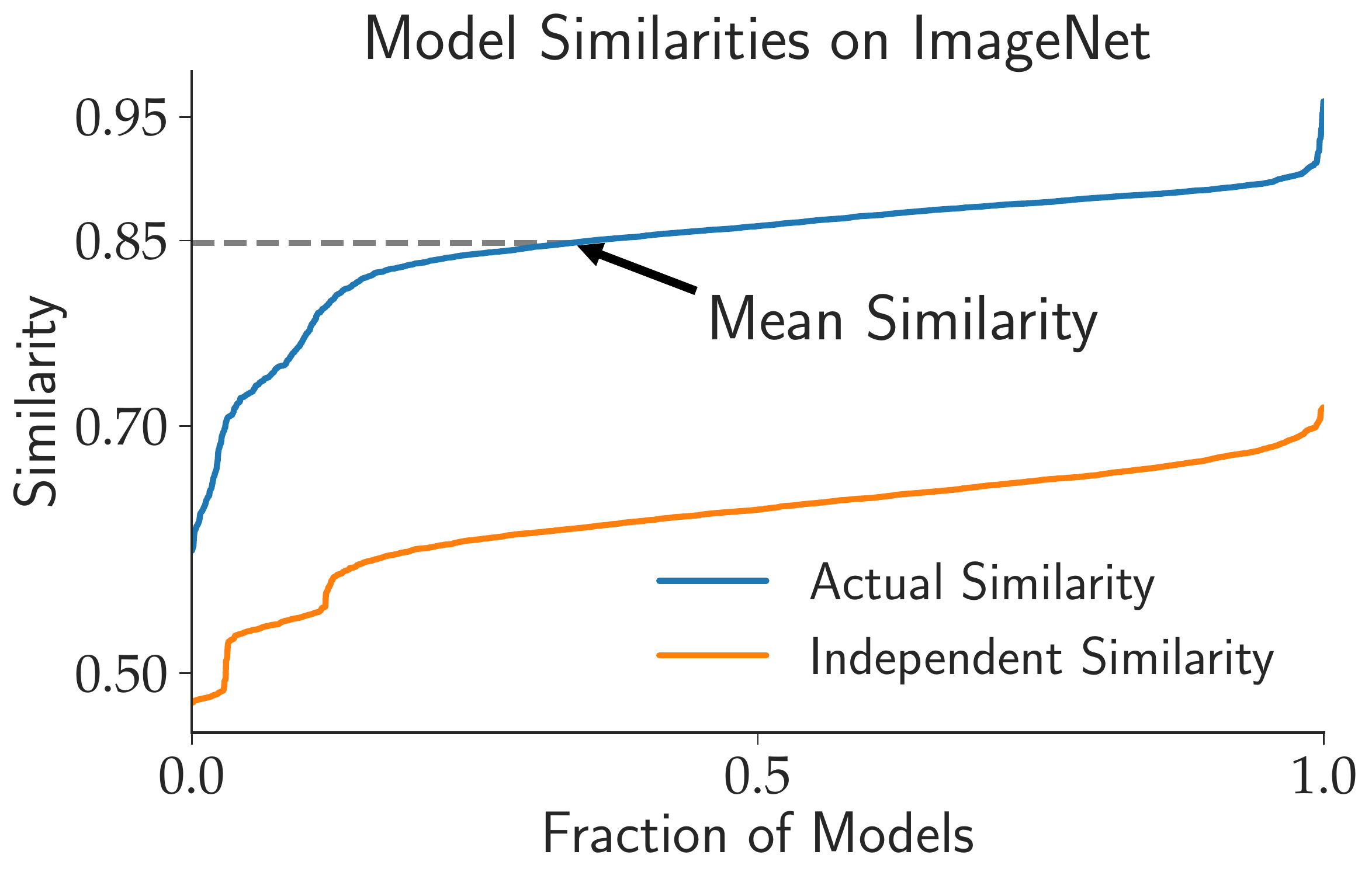}}
\caption{\small Pairwise model similarities on ImageNet}
\label{fig:imagenet_model_similarities}
\end{subfigure}
\begin{subfigure}[t]{\basefigwidth\textwidth}
\centerline{\includegraphics[width=\columnwidth]{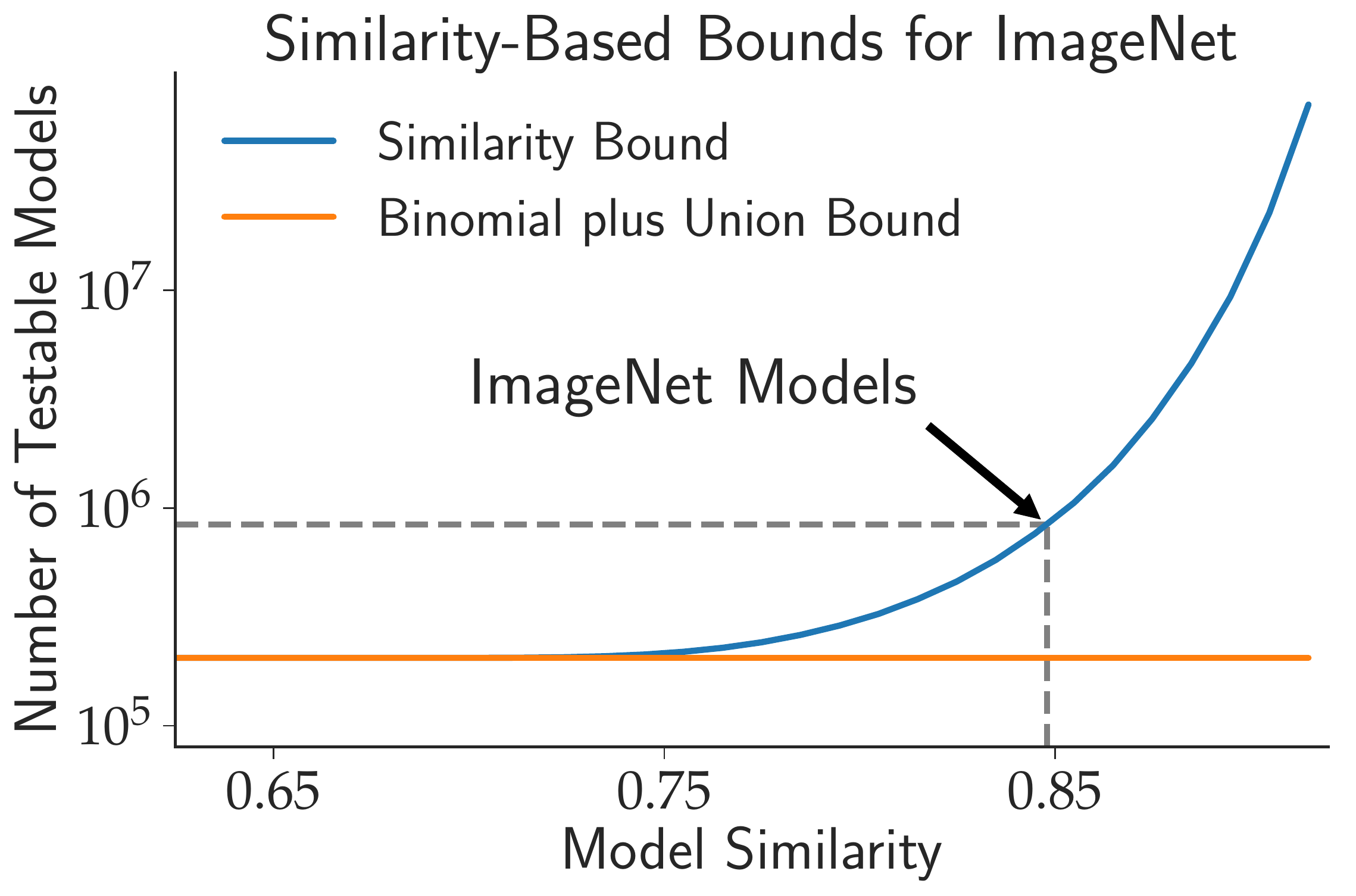}}
\caption{\small Number of models to be tested}
\label{fig:num_models_imagenet}
\end{subfigure}
\caption{\small \textbf{(a)} shows the empirical pairwise similarity between Imagenet models and the hypothetical similarity between  models if they were making mistakes independently. \textbf{(b)} plots the number of testable models on Imagenet such that the population error rates for all models are estimated up to $\pm 1\%$ error with probability $0.95$. We compare the guarantee of the standard union bound with that of a union bound which considers model similarities.}
\label{fig:imagenet} 
\end{figure} 

Underlying Figure~\ref{fig:imagenet_model_similarities} is a family of representative ImageNet models whose pairwise similarity we evaluate. The mean level of similarity of these models, together with a refined union bound, offers a $4 \times$ improvement  
over a carefully optimized baseline bound that does not take model similarity into account.
In Figure~\ref{fig:num_models_imagenet} we compare our guarantee on the number of holdout reuses with the baseline bound.  
This illustrates that our bound is not just asymptotic, but concrete---it gives meaningful values in the practical regime. Moreover, in Section~\ref{sec:empirical} we discuss how an  additional assumption on model predictions can boost the similarity based guarantee by multiple orders of magnitude.    

Investigating model similarity in practice further, we evaluate similarity of models encountered during the course of a large random  hyperparamter search and a large neural architecture search for the CIFAR-10 dataset. We find that the pairwise model similarities throughout both procedures remain high. The similarity provides a counterweight to the massive number of model evaluations, limiting the amount of overfitting we observe.


\subsection{Related work}

\citet{recht2019imagenet} recently created new test sets for ImageNet and CIFAR10, carefully following the original test set creation processes.
Reevaluating all proposed models on the new test sets showed that while there was generally an absolute performance drop, the effect of overfitting due to adaptive behavior was limited to non-existent.
Indeed, newer and better models on the old test set also performed better on the new test set, even though they had in principle more time to adapt to the test set. Also, \citet{yadav2019mnist} recently released a new test set for the seminal MNIST task, on which they observed no overfitting.   
  
\citet{dwork2015preserving} recognized the issue of adaptivity in holdout reuse and provided new holdout mechanisms based on noise addition that support quadratically more queries than the standard method in the worse case. There is a rich line of work on adaptive data analysis; \citet{smith2017information} offers a comprehensive survey of the field.

We are not the first to proffer an explanation for the apparent lack of overfitting in machine learning benchmarks. \citet{blum2015ladder} argued that if analysts only check if they improved on the previous best model, while ignoring models that did not improve, better adaptive generalization bounds are possible. \citet{zrnic2019natural} offered improved guarantees for adaptive analysts that satisfy natural assumptions, e.g. the analyst is unable to arbitrarily use information from queries asked far in the past.
More recently, \citet{feldman2019} gave evidence that the number of classes in a classification problem helps mitigate overfitting in benchmarks. We see these different explanations as playing together in what is likely the full explanation of the available empirical evidence.
In parallel to our work, \citet{yadav2019mnist} discussed the advantages of comparing models on the same test set; pairing tests can provide tighter confidence bounds for model comparisons in this setting than individual confidence intervals for each model.  
  
\section{Problem setup}
Let $f : \Xcal \rightarrow \Ycal$ be a classifier mapping examples from domain $\Xcal$ to a label from the set $\Ycal$.
Moreover, we consider a test set $S = \{ (x_1, y_1), \ldots \}$ of $n$ examples sampled {i.i.d.} from a data distribution $\datadist$. 
The main quantity we aim to analyze is the gap between the accuracy of the classifier $f$ on the test set $S$ and the population accuracy of the same classifier under the distribution $\datadist$.
If the gap between the two accuracies is large, we say $f$ overfit to the test set.

As is commonly done in the adaptive data analysis literature \cite{bassily2016algorithmic}, we formalize interactions with the test set via \emph{statistical queries}  $q: \Xcal \times \Ycal \rightarrow \RR$. 
In our case, the queries are $\{0,1\}$-valued; given a classifier $f$ we consider the query $q_f$ defined by $q_f (z) = \indi\{f(x) \neq y\}$, where $z = (x, y)$.
Then, we denote the empirical mean of query $q_f$ on the test set $S$ (i.e., $f$'s test error) by $\EE_S[q_f] = \frac{1}{n}\sum_{i=1}^n q_f(z_i)$.
The population mean (population error) is accordingly defined as $\EE_\datadist[q] = \EE_{z \sim \datadist} q(z)$.

When discussing overfitting, we  are usually interested in a set of classifiers, e.g., obtained via a hyperparameter search.
Let $f_1, \ldots, f_k$ be such a set of classifiers and $q_1, \ldots, q_k$ be the set of corresponding queries.
To quantify the probability that overfitting occurs (i.e., one of the $f_i$ has a large deviation between test and population accuracy), we would like to upper bound the probability
\begin{align}
  \label{eq:max_deviation}
\PP \left(\max_{1\leq i \leq k} |\EE_S[q_i] - \EE_\datadist[q_i]| \geq \varepsilon \right). 
\end{align}
A standard way to bound \eqref{eq:max_deviation} is to invoke the union bound and treat each query separately:
\begin{align}
  \label{eq:vanilla_union_bound} 
  \PP \left( \max_{1\leq i \leq k} |\EE_S[q_i] - \EE_\datadist[q_i]| \geq \varepsilon   \right) \leq \sum_{i = 1}^k \PP \left( | \EE_S[q_i] - \EE_\datadist[q_i]| \geq \varepsilon \right)
\end{align}
We can then utilize standard concentration results to bound the right hand side.
However, such an approach inherently cannot capture dependencies between the queries $q_i$ (or classifiers $f_i$).
In particular, we are interested in the similarity between two queries $q$ and $q'$ measued by $\PP \left( q(z) = q'(z) \right)$ (the probability of agreement between the 0-1 losses of the corresponding two classifiers).
The main goal of this paper is to understand how high similarity can lead to better bounds on \eqref{eq:max_deviation}, both in theory and in numerical experiments with real data from ImageNet and CIFAR-10.

\section{Non-adaptive classification}
\label{sec:non-adaptive}
We begin by analyzing the effect of the classifier similarity when the classifiers to be evaluated are chosen \emph{non-adaptively}.
For instance, this is the case when the algorithm designer fixes a grid of hyperparameters to be explored before evaluating any of the classifiers on the test set.
To draw valid gains from the hyperparameter search, it is important that the resulting test accuracies reflect the true population accuracies, i.e., probability  \eqref{eq:max_deviation} is small.

Bound \eqref{eq:vanilla_union_bound} is sharp when the events $\{|\EE_S[q_i] - \EE_\datadist[q_i]| \geq \varepsilon \}$ are almost disjoint, which is not true when the queries are similar to each other.
To address this issue, we modify our use of the union bound.
We consider the left tails $\Ecal_i = \{\EE_S[q_i] - \EE_\datadist[q_i] \geq \varepsilon \}$. For any $t \geq 0$, we obtain
\begin{align}
  \label{eq:proof_strategy}
  \PP \left(\bigcup_{i = 1}^k \Ecal_i \right) &\leq \PP \left( \{\EE_S[q_1] - \EE_\datadist[q_1] \geq \varepsilon - t \} \bigcup_{i = 2}^k \Ecal_i \right) \\                                                                                                                                                                                             \nonumber
     &= \PP \left(\EE_S[q_1] - \EE_\datadist[q_1] \geq \varepsilon - t\right) + \PP \left(\bigcup_{i = 2}^k \Ecal_i \cap  \{\EE_S[q_1] - \EE_\datadist[q_1] < \varepsilon - t \}  \right)\\\nonumber
   &\leq    \PP \left(\EE_S[q_1] - \EE_\datadist[q_1] \geq \varepsilon - t\right) + \sum_{i = 2}^k \PP \left( \Ecal_i \cap  \{\EE_S[q_1] - \EE_\datadist[q_1] < \varepsilon - t \}  \right).
\end{align}
Intuitively, the terms $\PP \left( \Ecal_i \cap  \{\EE_S[q_1] - \EE_\datadist[q_1] < \varepsilon - t \}  \right)$ are small when the queries $q_1$ and $q_i$ are similar:
if $\PP(q_1(z) = q_i(z))$ is large, we cannot simultaneously have $\EE_S[q_1] < \EE_\datadist[q_1] + \varepsilon - t$ and $\EE_S[q_i] \geq \EE_\datadist[q_i] + \varepsilon$ since the deviations go into opposite directions.
In the rest of this section, we make this intuition precise in and derive an upper bound on \eqref{eq:max_deviation} in terms of the query similarities.
Before we state our main result, we introduce the following notion of a similarity covering.
\begin{definition}\label{defn:cover}
  Let $\Fcal$ be a set of queries. We say a query set $M$ is a $\eta$ similarity
    cover of $\Fcal$ if for any query $q \in \Fcal$ there exist $q^\prime,
    q^{\prime\prime} \in M$ such that $\EE_\datadist[q^\prime] \leq
    \EE_\datadist[q]$,  $\EE_\datadist[q^{\prime\prime}] \geq \EE_\datadist[q]$,
    $\PP(q^\prime(z) = q(z)) \geq \eta$, and $\PP(q^{\prime \prime}(z) = q(z))
    \geq \eta$ ( $M$ does not necessarily have to be a subset of $\Fcal$). Let
    $N_\eta(\Fcal)$ denote the size of a minimal $\eta$ similarity cover of
    $\Fcal$ (when the query set $\Fcal$ is clear from context we use the simpler notation $N_\eta$).
\end{definition}



\begin{restatable}{theorem}{mainnonadaptive}
  \label{thm:main}
  Let $\Fcal = \{q_1, q_2, \ldots, q_k\}$ be a collection of queries $q_i \colon \Zcal \to \{0,1\}$ independent of the test set $\{z_1, z_2, \ldots, z_n\}$. Then, for any $\eta \in [0,1]$ we have
  \begin{align}
    \label{eq:cover_union_bound}
\PP\left( \max_{1\leq i \leq k} |\EE_S[q_i] - \EE_\datadist[q_i]| \geq \varepsilon \right) \; \leq \; 2 N_\eta e^{-\frac{n \varepsilon^2}{2}} + 2k e^{- \frac{n \varepsilon}{4} \log \left(1 + \frac{\varepsilon}{4(1 - \eta)}\right)}. 
  \end{align}
  Then, for all $\eta \leq 1 - \max\left\{\frac{ 2\log(4k/\delta)}{n} , \sqrt{\frac{\log(4N_\eta / \delta)}{2n}}\right\}$, we have with probability $1 - \delta$
  \begin{align}
   \label{eq:one_minus_eta_rate}
 \max_{1\leq i \leq k} |\EE_S[q_i] - \EE_\datadist[q_i]| \leq \max \left\{ \sqrt{ \frac{2\log(4 N_\eta / \delta)}{n}},  \sqrt{ \frac{32 (1 - \eta) \log \left(4 k / \delta\right)}{n}} \right\}. 
  \end{align}
  Moreover, if $\varepsilon =  \sqrt{ \frac{\log((2 N_\eta + 1) / \delta)}{n}}$ and $\eta \geq 1 - \frac{\varepsilon}{4 \left( e^{2\varepsilon}(2k)^{\frac{4}{n\varepsilon}}  - 1 \right) }$, we have with probability $1 - \delta$
  \begin{align}
    \label{eq:k_indep_rate}
     \max_{1\leq i \leq k} |\EE_S[q_i] - \EE_\datadist[q_i]| \leq  \varepsilon. 
  \end{align}
\end{restatable}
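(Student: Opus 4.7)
The plan is to execute the peeling sketched in \eqref{eq:proof_strategy} systematically, using the similarity cover $M$ as the anchor set of queries. I will focus on the upper tail $\{\EE_S[q_i] - \EE_\datadist[q_i] \geq \varepsilon\}$; the lower tail is handled symmetrically with the surrogates $q''_i$ from Definition~\ref{defn:cover}, and the two together account for the factors of two in \eqref{eq:cover_union_bound}. Fix a minimal $\eta$-similarity cover $M$, so $|M| = N_\eta$, and for each $q_i$ let $q'_i \in M$ be a surrogate with $\EE_\datadist[q'_i] \leq \EE_\datadist[q_i]$ and $\PP(q_i(z) = q'_i(z)) \geq \eta$.

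Setting $t = \varepsilon/2$ in \eqref{eq:proof_strategy}, the first step is to establish the inclusion
\begin{equation*}
\bigcup_{i=1}^k \bigl\{\EE_S[q_i] - \EE_\datadist[q_i] \geq \varepsilon\bigr\} \;\subseteq\; \bigcup_{q' \in M} \bigl\{\EE_S[q'] - \EE_\datadist[q'] \geq \tfrac{\varepsilon}{2}\bigr\} \;\cup\; \bigcup_{i=1}^k B_i,
\end{equation*}
where $B_i := \bigl\{\EE_S[q_i] - \EE_S[q'_i] \geq (\EE_\datadist[q_i] - \EE_\datadist[q'_i]) + \tfrac{\varepsilon}{2}\bigr\}$; here the cover condition $\EE_\datadist[q'_i] \leq \EE_\datadist[q_i]$ is what lets us absorb the population gap between $q_i$ and $q'_i$ when peeling. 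The first union is controlled by a Hoeffding union bound over the $N_\eta$ cover elements, yielding $N_\eta\, e^{-n\varepsilon^2/2}$. For each $B_i$, I would apply Bennett's inequality to the centered i.i.d.\ variables $D_j - \EE D_j$ with $D_j := q_i(z_j) - q'_i(z_j) \in \{-1, 0, 1\}$: on the upper side the range is at most one, and crucially $\mathrm{Var}(D_j) \leq \EE D_j^2 \leq \PP(q_i(z) \neq q'_i(z)) \leq 1 - \eta$. Bennett then gives $\exp\bigl(-n(1-\eta)\,h(\varepsilon/(2(1-\eta)))\bigr)$ with $h(x) = (1+x)\log(1+x) - x$, and the elementary inequality $h(x) \geq (x/2)\log(1 + x/2)$ converts this to the second summand of \eqref{eq:cover_union_bound}. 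Summing over $i$ and doubling for the two tails gives \eqref{eq:cover_union_bound}.

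To derive \eqref{eq:one_minus_eta_rate}, the next step is to set each of the two summands in \eqref{eq:cover_union_bound} equal to $\delta/2$ and solve for $\varepsilon$: the Hoeffding term gives $\sqrt{2\log(4N_\eta/\delta)/n}$, and on the Bennett term I would replace $\log(1 + x)$ by $x/2$, which is valid for $x \leq 1$. The two restrictions on $\eta$ in \eqref{eq:one_minus_eta_rate} are precisely what guarantees this substitution is legitimate at the candidate $\varepsilon$, and after rearrangement the Bennett term contributes $\sqrt{32(1-\eta)\log(4k/\delta)/n}$. For \eqref{eq:k_indep_rate}, rather than balancing the two summands I would force the Bennett piece to be dominated by the Hoeffding piece: demanding $2k\, e^{-(n\varepsilon/4)\log(1 + \varepsilon/(4(1-\eta)))} \leq e^{-n\varepsilon^2/2}$, taking logarithms, and solving for $1 - \eta$ yields exactly the stated lower bound on $\eta$. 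The resulting aggregate bound $(2N_\eta + 1)\, e^{-n\varepsilon^2/2} \leq \delta$ then produces the stated choice of $\varepsilon$.

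The main obstacle will be the Bennett step: one has to ensure that the upper-side range of $D_j - \EE D_j$ is at most $1$ (so the range constant in Bennett is not inflated), that the variance is controlled by $1 - \eta$ via the cover condition rather than the cruder $1$, and that the chosen lower bound on $h$ is the sharper $(x/2)\log(1+x/2)$ rather than the more common $(x/2)\log(1+x)$ --- only the former reproduces the precise constant $4(1-\eta)$ inside the logarithm of \eqref{eq:cover_union_bound}. The remaining steps, once this concentration bound is in place, are essentially bookkeeping.
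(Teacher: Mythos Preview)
Your proposal is correct and mirrors the paper's proof: the same cover-based peeling with $t=\varepsilon/2$, Hoeffding over the $N_\eta$ cover elements, a sub-Gaussian--to--Bernstein-type tail on the differences $q_i - q'_i$, and identical algebra to extract \eqref{eq:one_minus_eta_rate} and \eqref{eq:k_indep_rate}. The only variation is in the concentration step for the differences: the paper proves a bespoke Chernoff bound for $\{-1,0,1\}$-valued variables (Lemma~\ref{lem:gen_Bernoulli_concentration}), yielding the exponent $\tfrac{n\varepsilon}{4}\log\bigl(1+\tfrac{\varepsilon}{4p_1}\bigr)$ with $p_1\le 1-\eta$, whereas you reach the identical final exponent by invoking Bennett's inequality with $\mathrm{Var}(D_j)\le 1-\eta$ and the elementary inequality $h(x)\ge (x/2)\log(1+x/2)$ --- the two routes are interchangeable here, and your use of the off-the-shelf Bennett bound is a clean way to avoid the ad-hoc lemma.
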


The proof starts with the refined union bound \eqref{eq:proof_strategy}, or a standard triangle inequality, and then applies the
Chernoff concentration bound shown in Lemma~\ref{lem:gen_Bernoulli_concentration} for random variables
which take values in $\{-1,0,1\}$.
We defer the proof details of both the lemma and the theorem to Appendix \ref{app:non-adaptive}.
 
\begin{restatable}{lemma}{lemmanonadaptive}
  \label{lem:gen_Bernoulli_concentration}
  Suppose $X_i$ are i.i.d. discrete random variables which take values $-1$, $0$, and $1$ with probabilities $p_{-1}$, $p_0$, and $p_1$ respectively, and hence $\EE X_i = p_1 - p_{-1}$. Then, for any $t \geq 0$ such that $p_{1} - p_{-1} + t/2 \geq 0$ we have
  \begin{align*}
\PP\left(\frac{1}{n} \sum_{i = 1}^n X_i > p_1 - p_{-1} + t \right) \leq e^{-\frac{nt}{2} \log\left(1 + \frac{t}{2p_1} \right)}.
  \end{align*} 
\end{restatable}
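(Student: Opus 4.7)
The plan is to apply the Cram\'er--Chernoff exponential moment method with a tilt parameter $\lambda$ chosen to match the target exponent. For any $\lambda \geq 0$, Markov's inequality applied to $\exp(\lambda \sum_i X_i)$ yields
\[
\PP\left(\frac{1}{n}\sum_{i=1}^n X_i > p_1 - p_{-1} + t\right) \leq e^{-n\lambda(p_1 - p_{-1} + t)}\,\EE[e^{\lambda X_1}]^n.
\]
A direct computation gives $\EE[e^{\lambda X_1}] = 1 + p_1(e^\lambda - 1) + p_{-1}(e^{-\lambda} - 1)$, and the elementary bound $\log(1+x) \leq x$ then yields $\log \EE[e^{\lambda X_1}] \leq p_1(e^\lambda - 1) + p_{-1}(e^{-\lambda} - 1)$, so I can work with this exponent going forward.

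Motivated by the logarithm appearing in the target bound, I would choose $\lambda = \log(1 + t/(2p_1))$, so that $e^\lambda - 1 = t/(2p_1)$ and $e^{-\lambda} - 1 = -t/(2p_1 + t)$. This gives $p_1(e^\lambda - 1) = t/2$ and $p_{-1}(e^{-\lambda} - 1) = -p_{-1} t/(2p_1 + t)$, and substituting back, the net exponent (divided by $n$) becomes
\[
-\log\!\left(1 + \tfrac{t}{2p_1}\right)(p_1 - p_{-1} + t) + \tfrac{t}{2} - \tfrac{p_{-1} t}{2p_1 + t}.
\]
The goal is to show this is at most $-\tfrac{t}{2}\log(1 + t/(2p_1))$. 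Moving the target term to the left and combining the two fractional terms over $2p_1 + t$, the inequality reduces to
\[
(p_1 - p_{-1} + t/2)\,\log\!\left(1 + \tfrac{t}{2p_1}\right) \;\geq\; \frac{t\,(p_1 - p_{-1} + t/2)}{2p_1 + t}.
\]

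Under the lemma's hypothesis $p_1 - p_{-1} + t/2 \geq 0$, the common factor on both sides is non-negative, so the inequality collapses to the standard bound $\log(1+u) \geq u/(1+u)$ evaluated at $u = t/(2p_1)$; this follows from the fact that $(1+u)\log(1+u) - u$ vanishes at $u=0$ and has non-negative derivative $\log(1+u)$ for $u \geq 0$. The only genuinely creative step is the choice of tilt $\lambda$; once it is made, the hypothesis $p_1 - p_{-1} + t/2 \geq 0$ is precisely what lets me cancel the common factor without flipping the inequality, and the rest reduces to routine algebraic manipulation.
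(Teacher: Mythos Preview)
Your proof is correct. Both your argument and the paper's use the Cram\'er--Chernoff method and ultimately plug in the same tilt $\lambda = \log\bigl(1 + t/(2p_1)\bigr)$, so the overall strategy coincides; the difference lies only in how the moment generating factor is controlled. The paper introduces an auxiliary parameter $r$, seeks the largest $\lambda$ with $p_0 + p_1 e^\lambda + p_{-1} e^{-\lambda} \leq e^{\lambda r}$, and establishes this via Bernoulli's inequality $(1+z)^{1+r} \geq 1 + (1+r)z$ after the substitution $e^\lambda = 1+z$; setting $r = p_1 - p_{-1} + t/2$ then delivers the bound. You instead bound $\log \EE[e^{\lambda X_1}]$ by $p_1(e^\lambda - 1) + p_{-1}(e^{-\lambda} - 1)$ via $\log(1+x) \leq x$, fix $\lambda$ directly, and close with $\log(1+u) \geq u/(1+u)$. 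Your route is arguably a bit more transparent since it never introduces an extra parameter to optimize; the paper's route makes it slightly clearer why the hypothesis $p_1 - p_{-1} + t/2 \geq 0$ arises (it is precisely the nonnegativity of $r$). One small omission: your choice of $\lambda$ presupposes $p_1 > 0$; the paper handles $p_1 = 0$ separately by continuity, and you should note this edge case as well.
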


Discretization arguments based on coverings are standard in statistical learning theory. Covers based on the population Hamming distance $\PP(q^\prime (z) \neq q(z))$ have been previously studied \cite{devroye2013probabilistic, langford2002quantitatively} (note that for $\{0,1\}$-valued queries the Hamming distance is equal to the $L^2$ and $L^1$ distances). An important distinction between our result and prior work is that prior work requires $\eta$ to be greater than $1 - \varepsilon$. 
Theorem~\ref{thm:main} can offer an improvement over the standard guarantee $\sqrt{\log(k) / n}$ even when $\eta$ is much smaller than $1 - \varepsilon$.
First of all note that \eqref{eq:one_minus_eta_rate} holds for $\eta$ bounded away from one.
Moreover, since $e^{2\varepsilon} \approx 1 + 2\epsilon$, if $(2k)^{\frac{4}{n\varepsilon}} \leq 1 + \sqrt{\varepsilon}$ (the choice of $1 + \sqrt{\varepsilon}$ is somewhat arbitrary), we see the requirement on $\eta$ for \eqref{eq:k_indep_rate} is satisfied when $\eta$ is on the order of $1 - \sqrt{\varepsilon}$. 


\section{Adaptive classification}
\label{sec:adaptive} 
In the previous section, we showed similarity can prevent overfitting when the
sequence of queries is chosen \emph{non-adaptively}, i.e. when the queries
$\{q_1, q_2, \ldots, q_n\}$ are fixed independently of the test set $S$. In the
\emph{adaptive} setting, we assume the query $q_t$ can be selected as a function
of the previous queries $\{q_1, q_2, \ldots, q_{t-1}\}$ and estimates
$\{\EE_S[q_1], \EE_S[q_2], \ldots, \EE_S[q_{t-1}]\}$.  Even when queries are
chosen adaptively, we show leveraging similarity can provide sharper bounds on
the probability of overfitting, $\PP \left(\max_{1\leq i \leq k} |\EE_S[q_i] -
\EE_\datadist[q_i]| \geq \varepsilon \right)$.

In the adaptive setting, the field of adaptive data analysis offers a rich
technical repertoire to address overfitting \cite{dwork2015preserving, smith2017information}. In this
framework, analogous to the typical machine learning workflow, an analyst
iteratively selects a classifier and then queries a mechanism to provide an
estimate of test-set performance. In practice, the mechanism often used is the
\emph{Trivial Mechanism} which computes the empirical mean of the query on the
test set and returns the exact value to the analyst. For simplicity, we study
how similarity improves the performance of the trivial mechanism.

The empirical mean of any query can take at most $n + 1$
values, and thus a deterministic analyst might ask at most $(n + 1)^{k - 1}$
queries in $k$ rounds of interaction with the Trivial Mechanism. Let $\Fcal$
denote the set of $(n+1)^{k-1}$ possible queries. Then, we apply Theorem~\ref{thm:main} to $\Fcal$. 

\begin{corollary}
  \label{cor:ada}
  Let $\Fcal$ be the set of queries that a fixed analyst $\Acal$ might query the
  Trivial Mechanism. We assume that the Trivial Mechanism has access to a test
  set of size $n$. Let $\alpha \in [0,1]$,
\begin{align*}
    \varepsilon = \sqrt{\frac{4(k^{1 - \alpha} \log(n + 1) + \log(2/\delta))}{n}},
\end{align*}
and $\eta = 1 - \frac{\varepsilon}{4(e^{\varepsilon k^\alpha} - 1)}$. If
$N_\eta(\Fcal) \leq (n + 1)^{k^{1 - \alpha}}$, we have with probability $1 -
\delta$
\begin{align}
    \max_{1\leq i \leq k} |\EE_S[q_i] - \EE_\datadist[q_i]| \leq \varepsilon,
\end{align}
for any queries $q_1$, $q_2$, \ldots $q_k$ chosen adaptively by $\Acal$. 
\end{corollary}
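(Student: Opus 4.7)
The plan is to reduce the adaptive setting to the non-adaptive setting covered by Theorem~\ref{thm:main}. Since the analyst $\Acal$ is deterministic, its $t$th query is fully determined by the previous $t-1$ responses of the Trivial Mechanism, each of which lies in $\{0, 1/n, \ldots, 1\}$. Consequently the set $\Fcal$ of all queries $\Acal$ could \emph{possibly} ask over $k$ rounds has cardinality at most $(n+1)^{k-1}$. Crucially, $\Fcal$ is specified by $\Acal$'s branching rule alone---its definition does not depend on the realization of $S$---so every element of $\Fcal$ is non-adaptive in the sense required by Theorem~\ref{thm:main}, and the random sequence of queries actually issued by $\Acal$ on any particular draw of $S$ is a (random) subset of this fixed set $\Fcal$.

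Next, I would invoke equation~\eqref{eq:k_indep_rate} of Theorem~\ref{thm:main} on $\Fcal$, with the theorem's parameter $k$ replaced by $|\Fcal| \leq (n+1)^{k-1}$ and $N_\eta$ replaced by the hypothesized $N_\eta(\Fcal) \leq (n+1)^{k^{1-\alpha}}$. Substituting these values, the theorem's $\sqrt{\log((2 N_\eta + 1)/\delta)/n}$ is bounded above by the corollary's $\varepsilon = \sqrt{4(k^{1-\alpha}\log(n+1) + \log(2/\delta))/n}$. Since enlarging $\varepsilon$ only weakens the required deviation bound, taking the cleaner value in the corollary is legitimate, provided the concentration step in the theorem still closes with this larger $\varepsilon$ (which I would check by plugging back into bound~\eqref{eq:cover_union_bound}).

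The main obstacle is verifying that the corollary's formula $\eta = 1 - \varepsilon/(4(e^{\varepsilon k^\alpha} - 1))$ dominates the theorem's required lower bound $\eta \geq 1 - \varepsilon/(4(e^{2\varepsilon}(2(n+1)^{k-1})^{4/(n\varepsilon)} - 1))$. After taking logarithms, this reduces to showing
\begin{align*}
\varepsilon k^\alpha \;\geq\; 2\varepsilon + \tfrac{4}{n\varepsilon}\bigl(\log 2 + (k-1)\log(n+1)\bigr).
\end{align*}
I would verify this by reading off $n\varepsilon^2 \geq 4 k^{1-\alpha}\log(n+1)$ from the definition of $\varepsilon$ and rearranging to bound the dominant term $4(k-1)\log(n+1)/(n\varepsilon)$ by $\varepsilon k^\alpha$, while the factor $4$ (rather than $1$) inside the square root in the definition of $\varepsilon$ provides the slack needed to also absorb the $2\varepsilon$ and the $4\log 2/(n\varepsilon)$ terms for $k$ at least a constant.

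Combining these pieces, Theorem~\ref{thm:main} delivers a uniform deviation bound of $\varepsilon$ over all of $\Fcal$ with probability at least $1-\delta$. Since the queries $q_1, \ldots, q_k$ actually posed by $\Acal$ are contained in $\Fcal$ regardless of the realization of $S$, the same uniform bound applies to them, yielding $\max_{1 \leq i \leq k} |\EE_S[q_i] - \EE_\datadist[q_i]| \leq \varepsilon$ and completing the proof.
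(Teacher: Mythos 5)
Your reduction to the fixed query tree $\Fcal$ of size at most $(n+1)^{k-1}$ is correct and is exactly the paper's setup, and enlarging $\varepsilon$ beyond $\sqrt{\log((2N_\eta+1)/\delta)/n}$ is indeed harmless for the first term of \eqref{eq:cover_union_bound}. The gap is in the step you flag as "the main obstacle": the inequality you propose to verify,
\begin{align*}
\varepsilon k^\alpha \;\geq\; 2\varepsilon + \tfrac{4}{n\varepsilon}\bigl(\log 2 + (k-1)\log(n+1)\bigr),
\end{align*}
is false in the relevant regimes. From $n\varepsilon^2 = 4\bigl(k^{1-\alpha}\log(n+1)+\log(2/\delta)\bigr)$ you do get $\tfrac{4(k-1)\log(n+1)}{n\varepsilon}\leq \varepsilon k^{\alpha}$, but this already consumes the entire budget: the factor $4$ in the definition of $\varepsilon$ is exactly what is needed to push the dominant term under $\varepsilon k^{\alpha}$, and there is no slack left to absorb $2\varepsilon = \tfrac{n\varepsilon^2}{2}\cdot\tfrac{4}{n\varepsilon}$. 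Written multiplied through by $n\varepsilon/4$, your inequality becomes $\log(n+1)+k^{\alpha}\log(2/\delta)\geq 2k^{1-\alpha}\log(n+1)+2\log(2/\delta)+\log 2$, which fails outright at $\alpha=0$ (where the corollary must still hold and recover the $\tilde O(\sqrt{k/n})$ rate) and also fails at $\alpha=1/2$ for typical parameters such as $n=50{,}000$, $\delta=0.05$. The root cause is that \eqref{eq:k_indep_rate} is calibrated to force the second term of \eqref{eq:cover_union_bound} down to a single Hoeffding tail $e^{-n\varepsilon^2/2}$, which with the corollary's large $\varepsilon$ is astronomically smaller than $\delta$; that over-demands on $\eta$.

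The paper instead invokes only the first part of Theorem~\ref{thm:main}, i.e.\ \eqref{eq:cover_union_bound} directly, and needs each of its two terms to be at most $\delta/2$ rather than the second to be at most $e^{-n\varepsilon^2/2}$. For the corollary's $\eta$ one has $\log\bigl(1+\tfrac{\varepsilon}{4(1-\eta)}\bigr)=\varepsilon k^{\alpha}$ exactly, so the second term is $2(n+1)^{k-1}e^{-n\varepsilon^2 k^{\alpha}/4}=2(n+1)^{k-1}e^{-k\log(n+1)-k^{\alpha}\log(2/\delta)}\leq \delta/(n+1)$, and the first term is $2N_\eta e^{-n\varepsilon^2/2}\leq 2(n+1)^{-k^{1-\alpha}}(\delta/2)^2\leq \delta/2$. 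If you redirect your argument through \eqref{eq:cover_union_bound} with this weaker target for the second term, the rest of your proof goes through unchanged.
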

\begin{proof}
Note that when $\eta = 1 - \frac{\varepsilon}{4(e^{\varepsilon k^\alpha} - 1)}$
we have $\log\left(1 + \frac{\varepsilon}{4(1 - \eta)}\right) \geq \varepsilon
k^\alpha$. Then, the result follows from the first part of
Theorem~\ref{thm:main}. 
\end{proof}

Corollary~\ref{cor:ada} always applies with $\alpha = 0$, in which case the
bound matches standard results for the trivial mechanism with $\varepsilon =
\tilde{O}(\sqrt{k/n})$. However, if $\Fcal$ permits $N_\eta(\Fcal) \leq (n + 1)^{k^{1 - \alpha}}$ 
for $\eta = 1 - (\varepsilon/4)(e^{\varepsilon k^\alpha} - 1)^{-1}$  and some $\alpha > 0$, we obtain a super 
linear improvement in the dependence on $k$. For instance, if $\alpha = 1/2$, then $\varepsilon =
\tilde{O}(\sqrt{k^{1/2}/ n})$, and we obtain a quadratic improvement in the
number of queries for a fixed sample size, an improvement similar to that achieved by 
the Gaussian mechanism \cite{bassily2016algorithmic, dwork2015preserving}. 
Moreover, since our technique is essentially tightening a union bound, this improvement easily
extends to other mechanisms that rely on compression-based arguments, for instance,
the Ladder Mechanism \cite{blum2015ladder}.

\section{Empirical results}  
\label{sec:empirical}
So far, we have established theoretically that similarity between classifiers allows us to evaluate a larger number of classifiers on the test set without overfitting.
In this section, we investigate whether these improvements already occur in the regime of contemporary machine learning.
We specifically focus on ImageNet and CIFAR-10, two widely used machine learning benchmarks that have recently been shown to exhibit little to no adaptive overfitting in spite of almost a decade of test set re-use \cite{recht2019imagenet}.
For both datasets, we empirically measure two main quantities:
(i) The similarity between a wide range of models, some of them arising from hyperparameter search experiments.
(ii) The resulting increase in the number of models we can evaluate in a non-adaptive setting compared to a baseline that does not utilize the model similarities.

\subsection{Similarities on Imagenet} 
\label{sec:similaritiesimagenet}
We utilize the model testbed from \citet{recht2019imagenet},\footnote{Available at \url{ https://github.com/modestyachts/ImageNetV2}.} who collected a dataset of 66 image classifiers that includes a wide range of standard ImageNet models such as AlexNet \cite{krizhevsky2012imagenet}, ResNets \cite{he2016deep}, DenseNets \cite{huang2017densely}, VGG \cite{simonyan2014very}, Inception \cite{inceptionv1}, and several other models.
As a baseline for the observed similarities between these models, we compare them to classifiers with the same accuracy but otherwise random predictions:
given two models $f_1$ and $f_2$ with population error rates $\mu_1$ and $\mu_2$, we know that the similarity $\PP(\indi\{f_1(x) \neq y\} = \indi\{f_2(x) \neq y\})$ equals $\mu_1 \mu_2 + (1 - \mu_1)(1 - \mu_2)$ if the random variables $\indi\{f_1(x) \neq y\}$ and $\indi\{f_2(x) \neq y\}$ are independent.
Figure \ref{fig:imagenet_model_similarities} in the introduction shows these model similarities assuming the models make independent mistakes and also the empirical data for the $\binom{66}{2} = 2{,}145$ pairs of models.
We see that the empirical similarities are significantly higher than the random baseline (mean 0.85 vs 0.62).
 
The corresponding Figure~\ref{fig:num_models_imagenet} shows two lower bounds on the number of models that can be evaluated for the empirical ImageNet data.
In particular, we use $n = 50{,}000$ (the size of the ImageNet validation set) and a target probability $\delta = 0.05$ for the overfitting event \eqref{eq:max_deviation} 
with error $\varepsilon = 0.01$.
We compare two methods for computing the number of non-adaptively testable models: a guarantee based on the simple union bound \eqref{eq:vanilla_union_bound} and a guarantee based on our more refined union bound derived from our theoretical analysis in Section \ref{sec:non-adaptive}.
Later in this section, we introduce an even stronger bound that utilizes higher-order interactions between the model similarities and yields significantly larger improvements under an  assumption on the structure among the classifiers.

To obtain meaningful quantities in the regime of ImageNet, all bounds here require significantly sharper numerical calculations than the standard theoretical tools such as Chernoff bounds.
We now describe these calculations at a high level and defer the details to Appendix \ref{app:compute}.
After introducing the three methods, we compare them on the ImageNet data.

\paragraph{Standard union bound.} Given $n$, $\varepsilon$, and the population error rate of all models $\EE_\datadist[q_i]$, we can compute the right hand side of \eqref{eq:vanilla_union_bound} exactly.\footnote{After an additional union bound to decouple the left and right tails.}
It is well known that higher accuracies lead to smaller probability of error and hence allow for a larger number of test set reuses.
We assume all models have population accuracy $75.6\%$, the average top-1 accuracy of the $66$ Imagenet models.
In this case, the vanilla union bound \eqref{eq:vanilla_union_bound} guarantees that $k = 257{,}397$ models can be evaluated on a test set of size $50{,}000$ so that their empirical accuracies would lie in the confidence interval $0.756 \pm 0.01$ with probability at least $95\%$.

\paragraph{Similarity Union Bound.} While the union bound \eqref{eq:vanilla_union_bound} is easy to use, it does not leverage the dependencies between the random variables $\indi\{f_i(x) \neq y\}$ for $i \in \{1, 2, \ldots k\}$.
To exploit this property, we utilize the refined union bound \eqref{eq:proof_strategy} which is guaranteed to be an improvement over \eqref{eq:vanilla_union_bound} when the parameter $t$ is optimized.
In order to use \eqref{eq:proof_strategy}, we must compute the probabilities
\begin{align}
\label{eq:desired_numerics}
\PP\left( \left\{\EE_S[q_2] - \EE_\datadist[q_2] \leq \alpha_2 \right\} \cap \left\{\EE_S[q_1] - \EE_\datadist[q_1] \geq \alpha_1 \right\}\right)
\end{align}
for given $\alpha_1$, $\alpha_2$, $\EE_\datadist[q_1]$, $\EE_\datadist[q_2]$, and similarity $\PP(q_1(z) = q_2(z))$.
In Appendix~\ref{app:compute}, we show that we can compute these probabilities efficiently by assigning success probabilities to three independent Bernoulli random variables $X_1$, $X_2$, and $W$ such that $(X_1W, X_2W)$ is equal to $(q_1(z), q_2(z))$ in distribution. Let $p_w := \PP(W = 1)$.
Then, given i.i.d. draws $X_{1i}$, $X_{2i}$, and $W_{i}$, we condition on the values of $W_i$ to express probability \eqref{eq:desired_numerics} as  
 \begin{align}
   \label{eq:factorized_binomial_pair}
     &\PP\left( \left\{\EE_S[q_2] - \EE_\datadist[q_2] \leq \alpha_2 \right\} \cap \left\{\EE_S[q_1] - \EE_\datadist[q_1] \geq \alpha_1 \right\}\right)\\
     &\quad = \sum_{j = 0}^n {n \choose j} p_w^j (1 - p_w)^{n - j} \PP\left( \sum_{i = 1}^{j} {X}_{2i} \leq \lfloor n( p_2  + \alpha_2) \rfloor \right) \PP\left( \sum_{i = 1}^{j} {X}_{1i} \geq \lceil n( p_1  + \alpha_1) \rceil \right). \nonumber 
 \end{align}
We refer the reader to Appendix~\ref{app:compute} for more details. 
The two tail probabilities for $X_{1i}$ and $X_{2i}$ can be computed efficiently with the use of beta functions.
Using \eqref{eq:factorized_binomial_pair} and \eqref{eq:proof_strategy} with a binary search over $t$, we can compute the probability of making an error $\varepsilon$ when estimating the population error rates of $k$ models with given error rates and pairwise similarities.
Figure~\ref{fig:num_models_imagenet} shows the maximum number of models $k$ that can be evaluated on the same test set so that the probability of making an $\varepsilon = 0.01$ error in estimating all their error rates is at most $0.05$ when the models satisfy $\EE_\datadist[q_i] = 0.244$ and $\PP(q_{i}(z) = q_{j}(z)) \geq 0.85$ for all $1 \leq i, j\leq k$.
The figure shows that our new bound offers a significant improvement over the guarantee given by the standard union bound \eqref{eq:vanilla_union_bound}.

\paragraph{Similarity union bound with a Naive Bayes assumption.}
While the previous computation uses the pairwise similarities observed empirically to offer an improved guarantee on the number of allowed test set reuses, it does not take into account higher order dependencies between the models. In particular,
Figure~\ref{fig:imagedifficulty} in Appendix \ref{app:imagedifficulty} shows that $27.8\%$ of test images are correctly classified by all the models, $55.9\%$ of test images are correctly classified by $60$ of the $66$ models considered, and $4.7\%$ of test images are incorrectly classified by all the models. We now show how this kind of agreement between models enables a larger number of test set reuses. 
Inspired by the coupling used in \eqref{eq:factorized_binomial_pair}, we make the following assumption.
\begin{assumption}[Naive Bayes]
  \label{assumption}
  Let $q_1$, $q_2$, \ldots $q_k$ be a collection of queries such that $\EE_\datadist[q_i] = p$ and $\PP(q_i(z) = q_j(z)) = \eta$ for some $p$ and $\eta$, for all $1 \leq i, j\leq k$. We say such a collection
  has a Naive Bayes structure if there exist $p_x$ and $p_w$ in $[0,1]$ such that $(q_1(z), q_2(z), \ldots, q_k(z))$ is equal to $(X_1W, X_2W, \ldots, X_kW)$ in distribution, where $W$, $X_1$, \ldots $X_k$ are independent
  Bernoulli random variables with $\PP(W = 1) = p_w$ and $\PP(X_i = 1) = p_x$   for all $1\leq i \leq k$. 
\end{assumption}
Intuitively, a collection of queries $\indi\{f_i(x) \neq y\}$ has a Naive Bayes structure if the data distribution $\datadist$ generates easy examples  $(x,y)$ with probability $p_w$ such that all the models $f_i$ classify correctly, and if an example is not easy, the models make mistakes independently. As mentioned before, Figure~\ref{fig:imagedifficulty} supports the existence of such an easy set.
When a test point in the ImageNet test set is not an easy example, the models do not make mistakes independently. Therefore, Assumption~\ref{assumption} is not exactly satisfied by existing ImageNet models.
However, we know that independent Bernoulli trials saturate the standard union bound~\eqref{eq:vanilla_union_bound}. This effect can also be observed in Figure~\ref{fig:imagenet2}. As the similarity between the models decreases, i.e. $p_w$ decreases, the models make mistakes independently and the guarantee with Assumption~\ref{assumption} converges to the standard union bound guarantee.
So while Assumption~\ref{assumption} is not exactly satisfied in practice, the violation among the ImageNet classifiers likely implies an even better lower bound on the number of testable models. 
 
Assumption~\ref{assumption} is computationally  advantageous.
It allows us to compute the overfitting probability \eqref{eq:max_deviation} exactly, as we detail in Appendix~\ref{app:compute}.
Figure~\ref{fig:imagenet2} is an extension of
Figure~\ref{fig:num_models_imagenet}; it shows the relative improvement of our
bounds over the standard union bound in terms of the number of testable models
when $\varepsilon = 0.01$ and $\delta = 0.01$. Moreover,
Figure~\ref{fig:imagenet2} also shows that the relative improvement of our
bounds increases quickly with $\varepsilon$. According to Figure~\ref{fig:imagenet2}, Assumption~\ref{assumption} implies that we can evaluate $10^8$ models on the test set in the regime of ImageNet without overfitting. While this number of models might seem unnecessarily large, in Section~\ref{sec:adaptive} we saw that when models are chosen adaptively we must consider a tree of possible models, which can easily contain $10^8$ models.   
   
\begin{figure}[t] 
\centering
\begin{subfigure}[t]{\basefigwidth\textwidth}
\centerline{\includegraphics[width=\columnwidth]{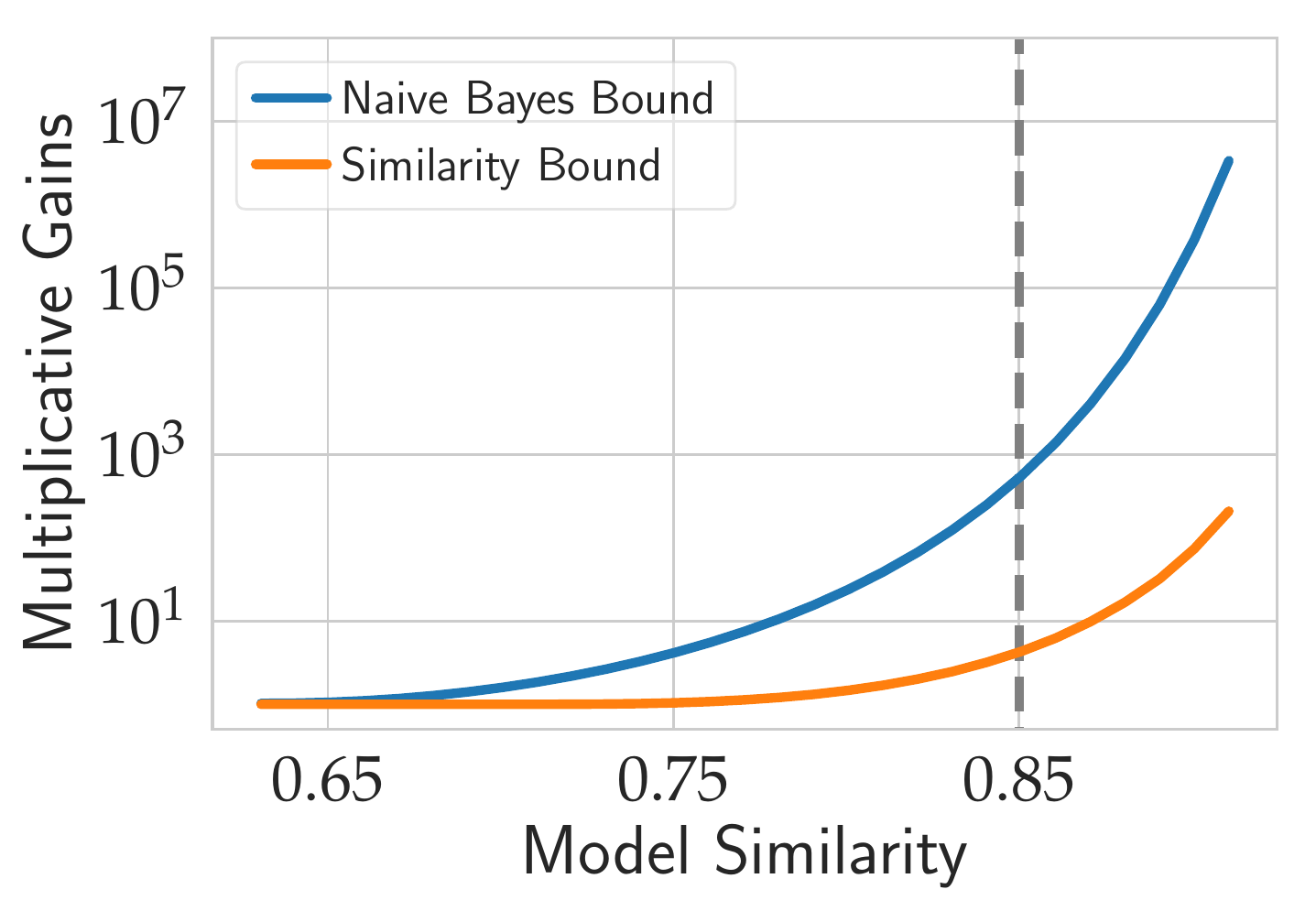}}
\end{subfigure}
\begin{subfigure}[t]{\basefigwidth\textwidth}
\centerline{\includegraphics[width=\columnwidth]{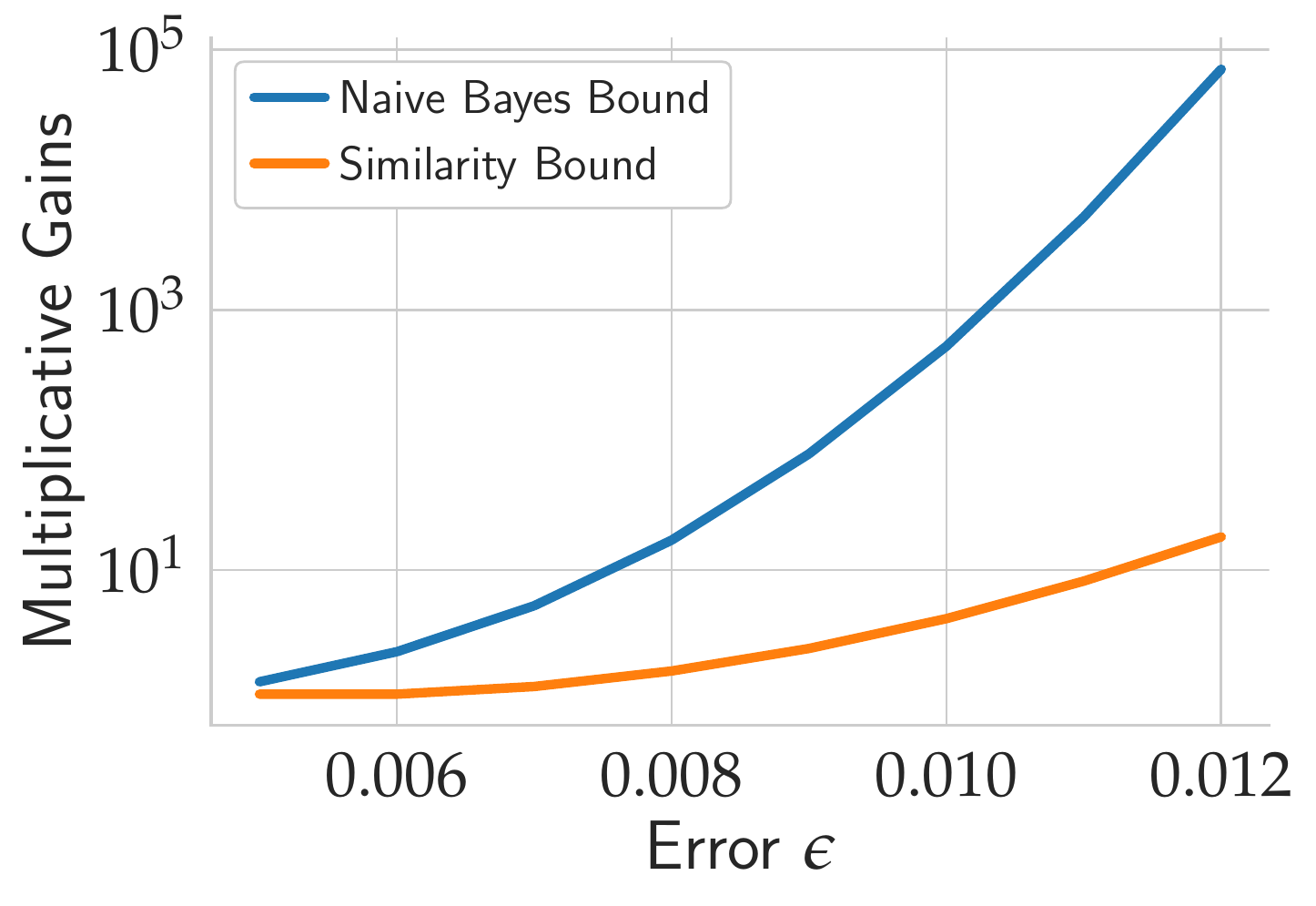}}
\end{subfigure}
\caption{\small Left figure shows the multiplicative gains in the number of testable models, as a function of model similarity, over the
  guarantee offered by the standard union plus binomial bound, with $\varepsilon = 0.01$ and $\delta = 0.05$. Right figure shows the
same multiplicative gains, but as a function of $\varepsilon$, when $\delta = 0.05$ and the pairwise similarity is $\eta = 0.85$.}
\label{fig:imagenet2}
\end{figure}

\subsection{Similarities on CIFAR-10}
Practitioners often evaluate many more models than the handful that ultimately
appear in publication.  The choice of architecture is the result of a
long period of iterative refinement, and the hyperparameters for any fixed
architecture are often chosen by evaluating a large grid of plausible models.
Using data from CIFAR-10, we demonstrate these common practices both generate
large classes of very similar models.

\paragraph{Random hyperparameter search.}
To understand the similarity between models evaluated in hyperparameter search,
we ran our own random search to choose hyperparameters for a ResNet-110. The
grid included properties of the achitecture  (e.g. type of residual block), the
optimization algorithm (e.g. choice of optimizer), and the data distribution
(e.g. data augmentation strategies). A full specification of the grid is included
in Appendix~\ref{sec:hyperparameter-grid}.  We sample and train 320 models, and,
for each model, we select 10 checkpoints evenly spaced throughout training. The
best model considered achieves accuracy of 96.6\%, and, after restricting to
models with accuracy at least 50\%, we are left with 1,235 model checkpoints.
In Figure~\ref{fig:hyperparameterSearch}, we show the similarity for each pair
of checkpoints and compute an upper bound on the corresponding similarity
covering number $N_\eta(\Fcal)$ for each possible value of $\eta$. As in the
case of ImageNet, CIFAR10 models found by random search are significantly more
similar than random chance would suggest.
\begin{figure}[t]
\centering
\begin{subfigure}[t]{\basefigwidth\textwidth}
\centerline{\includegraphics[width=\columnwidth]{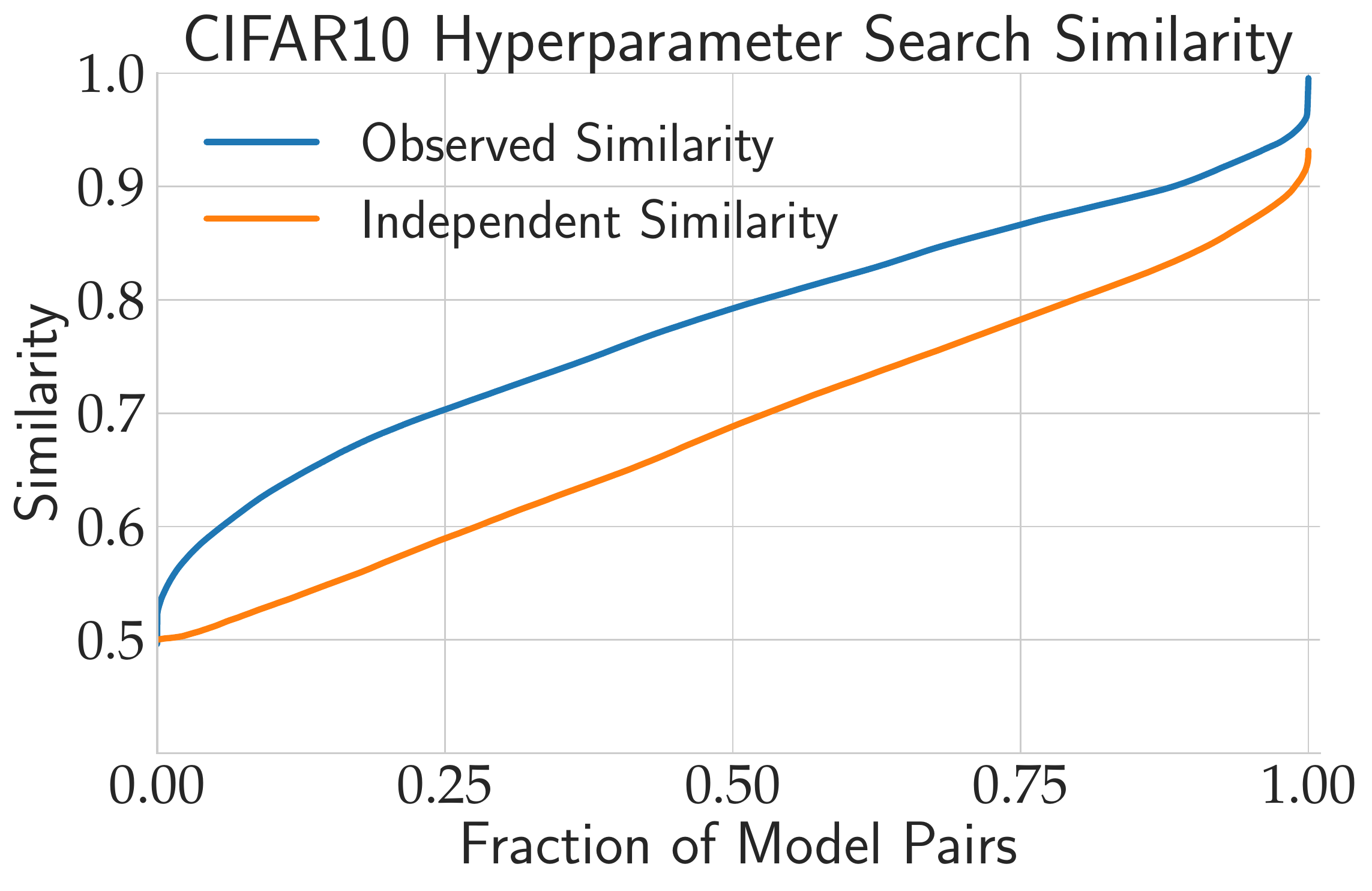}}
\label{fig:cifar10sims}
\end{subfigure}
\begin{subfigure}[t]{\basefigwidth\textwidth}
    \centerline{\includegraphics[width=\columnwidth]{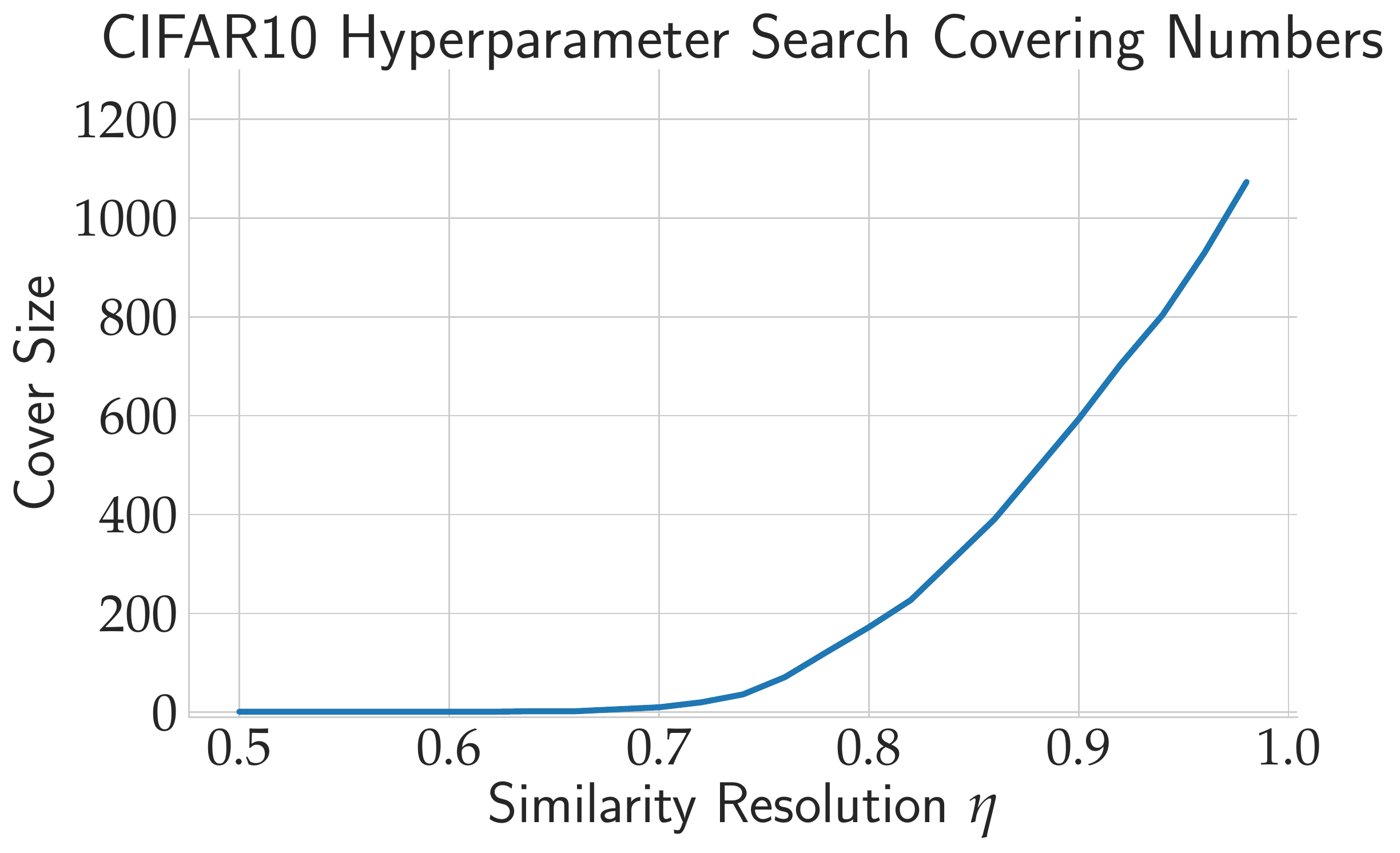}}
\label{fig:cifar10covering}
\end{subfigure}
\caption{\small Model similarities and covering numbers for random hyperparameter search on CIFAR10.}
\label{fig:hyperparameterSearch}
\end{figure}

\paragraph{Neural architecture search.}
In the random search experiment, all of the models were chosen
non-adaptively---the grid of models is fixed in advance. However, similarity
protects against overfitting also in the adaptive setting. To illustrate this, we
compute the similarity for models evaluated by automatic neural architecture
search. In particular, we ran the DARTS neural architecture search pipeline to
adaptively evaluate a large number of plausible models in search of promising
configurations \cite{li2019random,liu2018darts}.
In Table~\ref{table:nas_results},
we report the mean accuracies and pairwise similarities for $20$ randomly selected
configurations evaluated by DARTS, as well as the top $20$ scoring configurations
according to DARTS internal scoring mechanism. Table~\ref{table:nas_results} also
shows the multiplicative gains in the number of testable models offered by our similarity bound (SB)
and our naive Bayes bound (NBB) over the standard union bound are between one and four orders of magnitude.
Therefore, even in a high accuracy regime we can guarantee a significantly higher number of test set reuses without
overfitting when taking into account model similarities. 

\begin{table}[t]
\caption{Neural Architecture Search Similarities}
\label{table:nas_results}
\begin{center}
\begin{tabular}{l|l|l|ll}
Models & Mean Accuracy & Mean Similarity &  \multicolumn{2}{l}{Increase in Testable Models}\\ 
                           &   &   & SB & NBB \\
  \hline
$20$ Random  & $96.8\%$            & $97.5\%$ & $\mathbf{9.9 \times}$ & $\mathbf{1.6 \cdot 10^4 \times}$ \\
$20$ Highest Scoring & $96.9 \%$ & $97.6\%$  & $\mathbf{12.0 \times}$ & $\mathbf{3.4 \cdot 10^4 \times}$ \\
\end{tabular}
\end{center}
\end{table}


\section{Conclusions and future work}
We have shown that contemporary image classification models are highly similar, and that this similarity increases the longevity of the test set both in theory and in experiment.
It is worth noting that model similarity does not preclude progress on the test set: two models that are 85\% similar can differ by as much as 15\% in accuracy (for context: the top-5 accuracy improvement from the seminal AlexNet to the current state of the art on ImageNet is about 17\%).
In addition, it is well known that higher model accuracy implies a larger number of test set reuses without overfitting.
So as the machine learning practitioner explores increasingly better performing models that also become more similar, it can actually become \emph{harder} to overfit.

There are multiple important avenues for future work.
First, one natural question is \emph{why} the classification models turn out to be so similar.
In addition, it would be insightful to understand whether the similarity phenomenon is specific to image classification or also arises in other classification tasks.
There may also be further structural dependencies between models that mitigate the amount of overfitting.
Finally, it would be ideal to have a statistical procedure that leverages such model structure to provide reliable and accurate performance bounds for test set re-use.
 
\paragraph{Acknowledgements.}
We thank Vitaly Feldman for helpful discussions.
This work is generously supported in part by ONR awards N00014-17-1-2191, N00014-17-1-2401, and N00014-18-1-2833, the DARPA Assured Autonomy (FA8750-18-C-0101) and Lagrange (W911NF-16-1-0552) programs, a Siemens Futuremakers Fellowship, an Amazon AWS AI Research Award, a gift from Microsoft Research, and the National Science Foundation Graduate Research Fellowship Program under Grant No. DGE 1752814.

\bibliographystyle{abbrvnat}
\bibliography{similar_queries} 

\appendix
\section{Proofs for Section \ref{sec:non-adaptive}}
\label{app:non-adaptive}

\lemmanonadaptive*
\begin{proof}
We assume $p_1 > 0$. The result follows by continuity when $p_1 = 0$. We prove the more general case since the first part of the lemma is a particular case. By standard Chernoff methods we have
\begin{align*}
  \PP\left(\frac{1}{n} \sum_{i = 1}^n X_i > p_1 - p_{-1} + t\right) \leq e^{-n \lambda(t + p_1 - p_{-1})} \left(p_0 + p_1 e^{\lambda} + p_{-1}e^{-\lambda}\right)^n,
\end{align*}
for any $\lambda \in [0, \infty)$. Let $r > 0$ to be chosen later. Now, we would like to choose $\lambda$ to be nonnegative and as large as possible so that
\begin{align}
  \label{eq:desired_lam}
p_0 + p_1 e^{\lambda} + p_{-1}e^{-\lambda} \leq e^{\lambda r}.
\end{align}
By changing variables to $e^{\lambda} = z + 1$ for some $z \geq 0$ we want to find $z$ as large as possible so that
\begin{align*}
p_0 (z + 1) + p_1 (z + 1)^2 + p_{-1} \leq (z + 1)^{1 + r}.
\end{align*}
Then, by Bernoulli's inequality it suffices if $z$ satisfies the inequality
\begin{align*}
p_0 (z + 1) + p_1 (z + 1)^2 + p_{-1} \leq 1 + \left(1 + r\right) z, 
\end{align*}
which is equivalent to
\begin{align*} 
p_0 + p_1 z + 2p_1  \leq 1 + r.
\end{align*}
Hence, the desired inequality \eqref{eq:desired_lam} is satisfied if $z \leq \frac{p_{-1} - p_1 + r}{p_1}$, which can be satisfied by choosing $z = \frac{p_{-1} - p_1 + r}{p_1}$ when $p_{-1} - p_{1} + r \geq 0$. In this case, we would be able to set $\lambda = \log\left(1 + \frac{p_{-1} - p_1  + r}{p_1} \right)$ and obtain
\begin{align*}
 \PP\left(\frac{1}{n} \sum_{i = 1}^n X_i > p_1 - p_{-1} + t\right) \leq e^{- n \log\left(1 + \frac{p_{-1} - p_1 + r}{p_1}\right)(t + p_{1} - p_{-1} - r)}. 
\end{align*}
Set $r = p_1 - p_{-1} + t / 2$ and by the assumption on $t$ we are guaranteed that $r \geq 0$ and $p_{-1} -  p_1 + r \geq 0$. The conclusion follows. 
\end{proof}

\mainnonadaptive*
\begin{proof}
  First we prove ~\eqref{eq:cover_union_bound} and we start with the right tails. We have
  \begin{align*}
    \PP\left( \bigcup_{i = 1}^k \{\EE_S[q_i] - \EE_\datadist[q_i] \geq \varepsilon  \} \right) \leq \PP\left( \bigcup_{i = 1}^k \{ \EE_S[q_i] - \EE_\datadist[q_i] \geq \varepsilon \}  \bigcup_{\widetilde{q} \in M} \{ \EE_S[\widetilde{q}] - \EE_\datadist[\widetilde{q}] \geq \varepsilon \} \right),
  \end{align*}
  where $M$ is a minimal $\eta$ similarity cover of $\Fcal$. Then, there exists a partition of $\Fcal$ into subsets $R_{\widetilde{q}}$, with $\widetilde{q}\in M$,  such that for any $q \in \Fcal$ there exists $\widetilde{q}$ such that $q \in R_{\widetilde{q}}$, $\EE_\datadist[q] \geq \EE_\datadist[ \widetilde{q}$], and $\PP(q(z) = \widetilde{q}(z)) \geq \eta$. Since $R_{\widetilde{q}}$ is a partition of $\Fcal$, we have $\sum_{\widetilde{q} \in M} |R_{\widetilde{q}}| = k$. Therefore, following the same argument as in \eqref{eq:proof_strategy}, we have  
  \begin{align*}
    \PP\left( \bigcup_{i = 1}^k \{\EE_S[q_i] - \EE_\datadist[q_i] \geq \varepsilon  \} \right) &\leq \sum_{\widetilde{q} \in M} \PP\left(\EE_S[\widetilde{q}] - \EE_\datadist[\widetilde{q}] \geq \frac{\varepsilon}{2} \right) \\
                                                                                               &\quad + \sum_{\widetilde{q} \in M} \sum_{q \in R_{\widetilde{q}}} \PP \left(\{ \EE_S[\widetilde{q}] - \EE_\datadist[\widetilde{q}] \leq \varepsilon / 2 \} \cap \{\EE_S[q] - \EE_\datadist[q] \geq \varepsilon \} \right)\\
                                                                                               &\leq  \sum_{\widetilde{q} \in M} \PP\left(\EE_S[\widetilde{q}] - \EE_\datadist[\widetilde{q}] \geq \frac{\varepsilon}{2} \right)\\
                                                                                               &\quad +  \sum_{\widetilde{q} \in M} \sum_{q \in R_{\widetilde{q}}} \PP \left(\EE_S[\widetilde{q}] - \EE_\datadist[\widetilde{q}] + \varepsilon / 2 \leq \EE_S[q] - \EE_\datadist[q] \right).
  \end{align*}
  Now, for every $\widetilde{q} \in M$ and any $q\in R_{\widetilde{q}}$ we use a standard Chernoff bound and Lemma~\ref{lem:gen_Bernoulli_concentration} to show
\begin{align*}
\PP\left(\EE_S[\widetilde{q}] - \EE_\datadist[\widetilde{q}] \geq \frac{\varepsilon}{2}\right) \leq e^{- \frac{n\varepsilon^2}{2}} \; \text{ and } \; \PP \left(\EE_S[\widetilde{q}] - \EE_\datadist[\widetilde{q}] + \varepsilon / 2 \leq \EE_S[q] - \EE_\datadist[q] \right) \leq e^{ - \frac{n\varepsilon}{4} \log\left(1 + \frac{\varepsilon}{4(1- \eta)}\right)}.
\end{align*}
To see why we can apply Lemma~\ref{lem:gen_Bernoulli_concentration} note that $q(z) - \widetilde{q}(z)$ takes values in $\{-1, 0, 1\}$ with the probability of $0$ being at least $\eta$, and $\EE_\datadist [q - \widetilde{q}] \geq 0$ by the choice of the covering set. Since $|M| = N_\eta$ and since $\sum_{\widetilde{q} \in M} |R_{\widetilde{q}}| = k$, we find
\begin{align*}
 \PP\left( \bigcup_{i = 1}^k \{\EE_S[q_i] - \EE_\datadist[q_i] \geq \varepsilon  \} \right)  \leq N_\eta e^{- \frac{n\varepsilon^2}{2}} + k e^{-\frac{n\varepsilon}{4} \log\left(1 + \frac{\varepsilon}{4(1- \eta)}\right)}.
\end{align*}
An analogous argument for the left tails yields \eqref{eq:cover_union_bound}.
Now, we turn to showing \eqref{eq:one_minus_eta_rate}. The goal is to find $\varepsilon$ such that
\begin{align}
  \label{eq:two_ineqs}
2N_\eta e^{-\frac{n \varepsilon^2}{2}} \leq \frac{\delta}{2} \; \text{ and }\; 2ke^{-\frac{n\varepsilon}{4} \log\left(1 + \frac{\varepsilon}{4(1 - \eta)}\right)} \leq \frac{\delta}{2}. 
\end{align}
The first inequality is satisfied if $\varepsilon \geq \sqrt{\frac{2 \log(4 N_\eta / \delta)}{n}}$. To find $\varepsilon$ that satisfies the second condition we make use of the inequality $\log(1 + t) \geq \frac{t}{t + 1}$ for all $t \geq 0$. We search for $\varepsilon$ that also satisfies $\varepsilon \leq 4(1 - \eta)$. Then,
\begin{align*}
\frac{n\varepsilon}{4} \log\left(1 + \frac{\varepsilon}{4(1 - \eta)} \right) \geq \frac{n \varepsilon^2}{32 (1 - \eta)},
\end{align*}
and we would like the right hand side to be at least $\log(4k/\delta)$. If we choose
\begin{align*}
  \varepsilon = \max \left\{\sqrt{\frac{2 \log(4 N_\eta / \delta)}{n}}, \sqrt{\frac{32(1 - \eta) \log(4k/\delta)}{n}}\right\},
\end{align*}
the condition $\varepsilon \leq 4(1 - \eta)$ is satisfied because of the assumption on $\eta$. In this case, both conditions \eqref{eq:two_ineqs} are satisfied and \eqref{eq:one_minus_eta_rate} is proven.
Finally, note that when $\eta \geq 1 - \frac{\varepsilon}{4(e^{2\varepsilon}(2k)^{\frac{4}{n\varepsilon}}  - 1) }$ we have
\begin{align*}
2k e^{-\frac{n\varepsilon}{4}\log\left( 1 + \frac{\varepsilon}{4(1 - \eta)}\right)} \leq e^{-\frac{n \varepsilon^2}{2}}.
\end{align*}
Then, \eqref{eq:k_indep_rate} follows by choosing $\varepsilon = \sqrt{\frac{\log((2 N_\eta + 1) / \delta)}{n}}$. This completes the proof. 

\end{proof}

\section{Tail probability of two dependent binomials}
\label{app:compute} 

In this section we detail the computations of the two similarity union bounds (with and without the Naive Bayes assumption).

\paragraph{Similarity Union Bound.} We wish to compute the probability
\begin{align}
\label{eq:desired_numerics_app}
\PP\left( \left\{\EE_S[q_2] - \EE_\datadist[q_2] \leq \alpha_2 \right\} \cap \left\{\EE_S[q_1] - \EE_\datadist[q_1] \geq \alpha_1 \right\}\right),
\end{align}
where $q_1(z)$ and $q_2(z)$ have some joint distribution over $\{0,1\}^2$. Let use denote $p_1 = \EE_\datadist [q_1]$, $p_2 = \EE_\datadist[q_2]$, and $\eta = \PP(q_1(z) = q_2(z))$ respectively. These three quantities fully determine the joint probability distribution of $q_1(z)$ and $q_2(z)$. Specifically, we have
\begin{align*}
    &\PP(q_1(z) = 1, q_q(z) = 1) = \frac{p_1 + p_2 + \eta - 1}{2} \;\text{, \quad} \; \PP(q_1(z) = 1, q_q(z) = 0) = \frac{1 + p_1 - p_2 - \eta}{2}\\
    &\PP(q_1(z) = 0, q_q(z) = 1) = \frac{1 + p_2 - p_1 - \eta}{2} \;\text{, \quad} \; \PP(q_1(z) = 0, q_q(z) = 0) = \frac{1 + \eta - p_1 - p_2}{2}.  
\end{align*}
We denote these four probabilities by $p_{11}$, $p_{10}$, $p_{01}$, and $p_{00}$ respectively. We aim to find three independent Bernoulli random variables $X_1$, $X_2$, and $W$ such that $(X_1 W, X_2  W)$ equals $(q_1(z), q_2(z))$ in distribution. It turns out we can achieve this whenever $p_{11} \geq (p_{10} + p_{11})(p_{01} + p_{11})$, a condition that is always satisfied in the settings we consider, by setting 
\begin{align*}
    \PP(X_1 = 1) &= \frac{p_{11}}{p_{01} + p_{11}} \; \text{, } \quad \PP(X_2 = 1) = \frac{p_{11}}{p_{10} + p_{11}} \; \text{, }\quad  \PP(W = 1) = \frac{(p_{10} + p_{11})(p_{01} + p_{11})}{p_{11}}. 
\end{align*}
Then, given i.i.d. draws $X_{1i}$, $X_{2i}$, and $W_{i}$, probability \eqref{eq:desired_numerics} equals
\begin{align}
 \label{eq:new_numerics}
\PP\left( \left\{\sum_{i = 1}^{n} X_{2i}W_{i} \leq \lfloor n( p_2  + \alpha_2) \rfloor \right\} \bigcap \left\{\sum_{i = 1}^n X_{1i}W_i \geq \lceil n(p_1 +  \alpha_1) \rceil \right\}\right). 
\end{align}
 Denote $p_w = \PP(W = 1)$. Then, we condition on the possible values of $W_i$ to obtain
 \begin{align}
     &\PP\left( \left\{\EE_S[q_2] - \EE_\datadist[q_2] \leq \alpha_2 \right\} \cap \left\{\EE_S[q_1] - \EE_\datadist[q_1] \geq \alpha_1 \right\}\right)\\
     &\quad = \sum_{j = 0}^n {n \choose j} p_w^j (1 - p_w)^{n - j} \PP\left( \sum_{i = 1}^{j} {X}_{2i} \leq \lfloor n( p_2  + \alpha_2) \rfloor \right) \PP\left( \sum_{i = 1}^{j} {X}_{1i} \geq \lceil n( p_1  + \alpha_1) \rceil \right). \nonumber 
  \end{align}
  The two tail probabilities for $X_{1i}$ and $X_{2i}$ can be computed efficiently with the use of beta functions.

  \paragraph{Similarity union bound with a Naive Bayes assumption.} In this section we wish to compute directly the overfitting probability
\begin{align}
\PP \left(\max_{1\leq i \leq k} |\EE_S[q_i] - \EE_\datadist[q_i]| \geq \varepsilon \right)
\end{align}
when the query vector $(q_1(z), q_2(z), \ldots, q_k(z))$ is equal in distribution to $(X_1W, X_2W, \ldots, X_kW)$ for some independent Bernoulli random variables $W$, $X_1$, \ldots $X_k$. Recall that we assume that all queries $q_i$ have equal error rates  $\EE_\datadist[q_i]$; let us denote it $\mu = \EE_\datadist[q_i]$. Moreover, for any two queries $q_i$ and $q_j$ we have $\PP(q_i(z) = q_j(z)) = \eta$. 

Suppose we are given {i.i.d.} draws $W_i$ and {i.i.d.} draws $X_{\ell i}$ for $1 \leq i \leq n$ and $1 \leq \ell \leq k$. Then, if $p_w := \PP(W = 1)$, by conditioning on the values of the random variables $W_i$ we obtain 
\begin{align*}
\PP \left(\max_{1\leq i \leq k} |\EE_S[q_i] - \EE_\datadist[q_i]| \geq \varepsilon \right) = \sum_{j = 1}^n  {n \choose j} p_w^j (1 - p_w)^{n - j}  \PP\left(\bigcup_{\ell = 1}^k \left|\frac{1}{n}\sum_{i = 1}^j X_{\ell i} - \mu \right| \geq \varepsilon \right).
\end{align*}
The random variables $\sum_{i = 1}^j X_{\ell i}$ have the same distribution for all $\ell$ and are independent. Then, 
\begin{align*}
  \PP\left(\bigcup_{\ell = 1}^k \left|\frac{1}{n}\sum_{i = 1}^j X_{\ell i} - \mu \right| \geq \varepsilon \right) &= 1 - \PP\left(\bigcap_{\ell = 1}^k \left|\frac{1}{n}\sum_{i = 1}^j X_{\ell i} - \mu \right| < \varepsilon \right)\\
 &= 1 - \PP\left( \left|\frac{1}{n}\sum_{i = 1}^j X_{1 i} - \mu \right| < \varepsilon \right)^k. 
\end{align*}
Therefore, we have 
\begin{align*}
\PP \left(\max_{1\leq i \leq k} |\EE_S[q_i] - \EE_\datadist[q_i]| \geq \varepsilon \right) =  \sum_{j = 1}^n  {n \choose j} p_w^j (1 - p_w)^{n - j} \left[1 - \PP\left( \left|\frac{1}{n}\sum_{i = 1}^j X_{1 i} - \mu \right| < \varepsilon \right)^k \right]. 
\end{align*}

\section{Empirical distribution of image difficulty in ImageNet}
\label{app:imagedifficulty}

\begin{figure}[!h]
  \centering
  \includegraphics[width=.6\columnwidth]{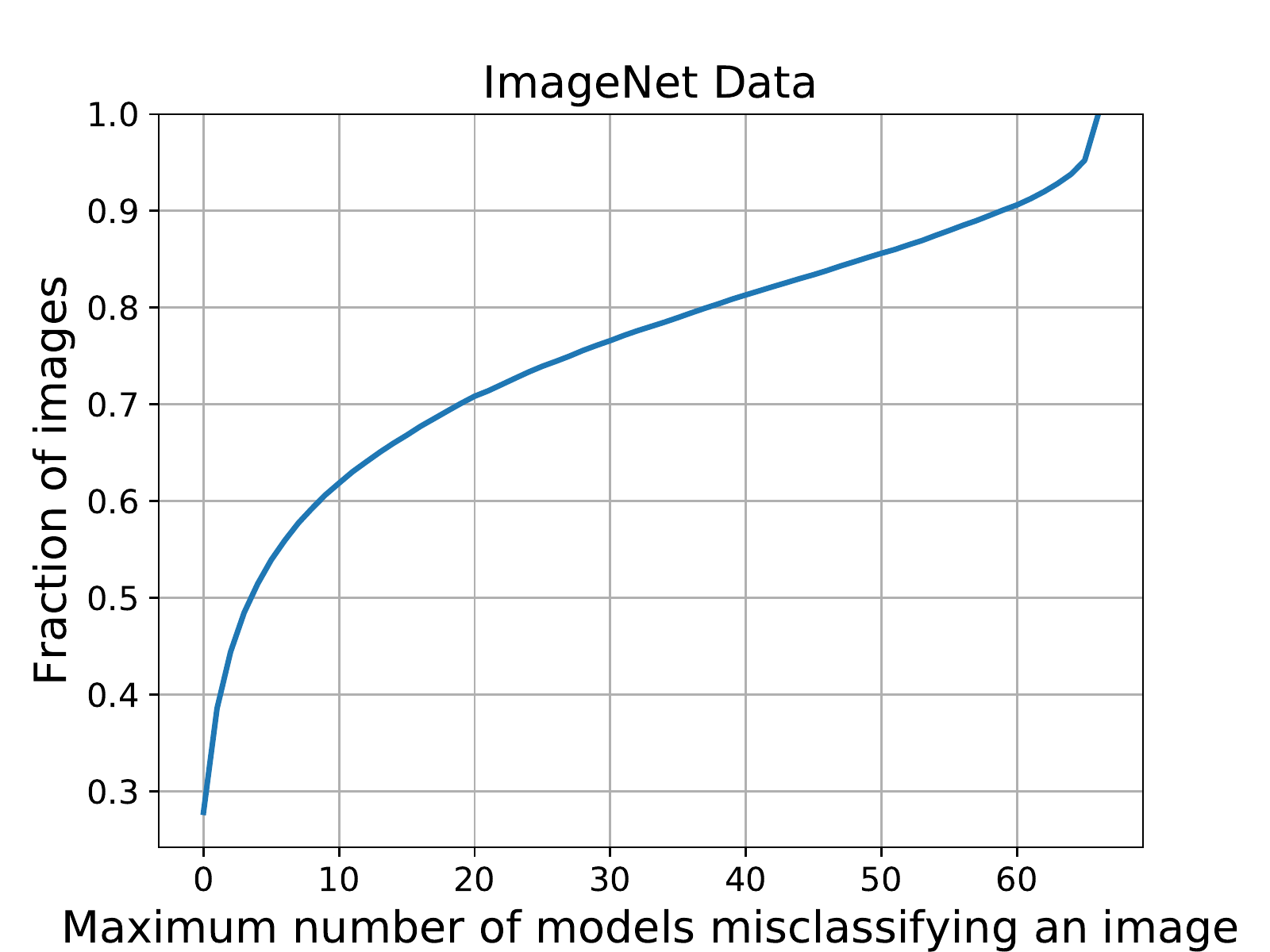}
  \caption{The empirical ``difficulty'' distribution of the 50,000 images in the ImageNet validation set as measured by the classifiers in the testbed from \citet{recht2019imagenet}.
  The plot shows how many of the images are misclassified by at most a certain number of the models.
  For instance, about 27.8\% of the images are correctly classified by all models, and 55.9\% of the images are correctly classified by 60 of the 66 models.
  4.7\% of the images are misclassified by all models.
  The plot shows that a significant fraction of images is classified correctly by all or almost all of the models.
  These empirical findings support the Naive Bayes assumption in Section \ref{sec:similaritiesimagenet}.}
  \label{fig:imagedifficulty}
\end{figure}

\section{CIFAR-10 random hyperparameter grid search}
\label{sec:hyperparameter-grid}
We conducted a large hyperparameter search on a ResNet110. All of our
experiments build on the ResNet implementation and training code provided by
\url{https://github.com/hysts/pytorch\_image\_classification}. In
Table~\ref{table:hparam-table}, we specify the grid used in the experiments. If
not explicitly stated, all other hyperparameters are set to their default
settings. 

\begin{table}
\caption{Random grid search hyperparameters.}
\label{table:hparam-table}
\begin{center}
\begin{tabular}{ll}
    Parameter & Sampling Distribution \\
    \hline \\
    Number of base channels & $\unif\{4, 8, 16, 32\}$ \\
    Residual block type & $\unif\{\text{"Basic", "Bottleneck"}\}$ \\
    Remove ReLu before residual units & \unif\{True, False\} \\
    Add BatchNorm after last convolutions &  \unif\{True, False\} \\
    Preactivation of shortcuts after downsampling & \unif\{True, False\} \\
    Batch size & \unif\{32, 64, 128, 256\} \\
    Base learning rate & \unif[1e-4, 0.5] \\
    Weight decay & $10^{\unif[-5, -1]}$ \\
    Use weight decay with batch norm & \unif\{True, False\} \\
    Optimizer & {\small \unif\{SGD, SGD with Momentum,}\\
    & \quad {\small Nesterov GD, Adam\} }\\
    Momentum (SGD with momentum) & \unif\{0.6, 0.99\} \\
    $\beta_1$ (Adam) & \unif[0.8, 0.95] \\
    $\beta_2$ (Adam) & \unif[0.9, 0.999] \\
    Learning rate schedule & \unif\{Cosine, Fixed Decay\} \\
    Learning rate decay point 1 (Fixed Decay) & \unif\{40, 60, 80, 100\} \\
    Learning rate decay point 2 (Fixed Decay) & \unif\{120, 140, 160, 180\} \\
    Use random crops & \unif\{True, False\} \\
    Random crop padding & \unif\{2, 4, 8\} \\
    Use horizontal flips& \unif\{True, False\} \\ 
    Use cutout & \unif\{True, False\} \\ 
    Cutout size & \unif\{8, 12, 16\} \\ 
    Use dual cutout augmentation & \unif\{True, False\} \\ 
    Dual cutout $\alpha$ & \unif[0.05, 0.3] \\
    Use random erasing & \unif\{True, False\} \\ 
    Random erasing probability & \unif[0.2, 0.8] \\ 
    Use mixup data augmentation & \unif\{True, False\} \\ 
    Mixup $\alpha$ & \unif[0.6, 1.4] \\
    Use label smoothing & \unif\{True, False\} \\
    Label smoothing $\epsilon$ & \unif[0.01, 0.2]
\end{tabular}
\end{center}
\end{table}

\end{document}